\newtheorem{theorem}{Theorem}[section]
\newtheorem{corollary}{Corollary}[theorem]
\newtheorem{lemma}{Lemma}[section]
\title{An adaptive sampling algorithm for data-generation to build a data-manifold for physical problem surrogate modeling
}
\author{
  Chetra Mang, Axel TahmasebiMoradi,David Danan and Mouadh Yagoubi \\
  IRT SystemX \\
  2 Bd Thomas Gobert, 91120 Palaiseau, France \\
  \texttt{\{chetra.mang, a.tahmasebimoradi\}@irt-systemx.fr} \\
  \texttt{\{david.danan, mouadh.yagoubi\}@irt-systemx.fr} \\
}
\begin{document}
\maketitle

\begin{abstract}
Physical models classically involved Partial Differential equations (PDE) and depending of their underlying complexity and the level of accuracy required, and known to be computationally expensive to  numerically solve them. Thus, an idea would be to create a surrogate model relying on data generated by such solver. However, training such a model on an imbalanced data have been shown to be a very difficult task. Indeed, if the distribution of input leads to a poor response manifold representation, the model may not learn well and consequently, it may not predict the outcome with acceptable accuracy. In this work, we present an Adaptive Sampling Algorithm for Data Generation (ASADG) involving a physical model. As the initial input data may not accurately represent the response manifold in higher dimension, this algorithm iteratively adds input data into it. At each step the barycenter of each simplicial complex, that the manifold is discretized into, is added as new input data, if a certain threshold is satisfied. We demonstrate the efficiency of the data sampling algorithm in comparison with LHS method for generating more representative input data. To do so, we focus on the construction of a harmonic transport problem metamodel by generating data through a classical solver. By using such algorithm, it is possible to generate the same number of input data as LHS while providing a better representation of the response manifold.
\end{abstract}

\keywords{Physical Simulation \and Data Manifold \and Adaptive Samplings, Surrogate model \and Reduced Components \and Harmonic Transport Problem}

\section{Introduction}
Machine Learning (ML) belongs to the broad family of Artificial Intelligence (AI) that gives the computers the capability to learn the hidden patterns in data and eventually make acceptable predictions. This is accomplished by optimizing a performance criterion with respect to data. A sub-branch of ML models involves surrogate models (metamodels); these models have been extensively used within the literature in the context of simulation optimization for real-world problems. It is especially useful to address the situation where one would like to replace classical simulation tools, that are computationally expensive, by a more cost-effective alternative. The learning capability of surrogate models models depend on different factors, one being the quality of the data. In practice, how exactly to gather/generate data in certain application domains so that the models trained on them are generally egalitarian, is a challenging task. One of the earliest method for data generation is "equal sampling" (equal space  or probability sampling); using a classical Latin-Hypercube Sampling (LHS) method, every dimension is parametric space is divided (almost) equally. Other equal sampling methods are Cartesian sampler (equal space) and uniform sampler (equal probability).

In general, there are two types of conventional sampling approaches; 1- fixed approaches and 2- adaptive approaches. For the latter, a criterion must be defined to evaluate the model. The former is also categorized into 1- deterministic, and 2- stochastic methods. Either type can be further branched into 1- one-shot, and 2- sequential techniques. The commonly used one-shot approaches are: Factorial (deterministic), Latin Hypercube Sampling (stochastic); and the conventional sequential approaches are: Sobol/Halton/Faure sequences (deterministic) and Monte Carlo (stochastic) \cite{dupuis2019surrogate}. 

With the use of fixed sampling approaches, one can generate a parametric spaces within a reasonable time frame. However, this doesn't necessarily mean that the associated points on the response manifolds are also well positioned from each other. Moreover, the associated points on the response manifold may not cover the area with drastic changes. To alleviate these problems with regard to equal sampling, one may increase the number of samples. Unfortunately, this may not be the best idea, as for some problems, generating a pair of data sample (input and output data) can be computationally costly. This is usually the case for dataset created by solving potentially nonlinear PDE defined on large domains, e.g. Computational Fluid Dynamic (CFD) using classical discretization approaches such as Finite Element, Finite Difference, Finite Volume.
As metamodels have gained intensive attention, various techniques have been continuously developed in combination with suitable sampling strategies. It includes, while not exhaustive, non-intrusive POD \cite{guenot2013adaptive}, Support Vector Machine \cite{mandelli2012adaptive} or Regression \cite{zhou2015adaptive}, and Kriging \cite{fuhg2021state}. For more information regarding the metamodels, we refer to the survey \cite{liu2018survey} and the references therein. Note that the sampling strategies described in this context are, in general, metamodel agnostic.

By nature, the adaptive sampling approaches are sequential. Using these methods allows us to choose more "efficient" points in parametric space and as a result, the metamodel is likely to give less error with fewer samples. 
As stated before, the adaptive approaches require a well-defined criterion to search for new points in parametric space. Generally, there are two types of search in this space, i.e. explorative, and exploitative searches. Explorative search is about finding the regions (space-filling design) in parametric space that are arid of data points. Conversely, exploitative methods try to spot the regions whose associated mappings on response manifold have considerable gradient. Consequently, by adding new data points, the metamodel's performance is evaluated, and further addition of points is decided. 

\subsection{Related work}
To improve the efficiency and accuracy of POD data-fitting surrogate models, it was suggested in \cite{guenot2013adaptive} to either improve the basis or the coefficient model. To improve the modal basis, a parameter to show the influence of each snapshot on the modal basis was computed. On the other hand, to improve the modal coefficients, two approaches, 1- on at a time (when there are significant differences in the quality of the modal coefficients), and 2- all together (when the quality of these coefficients are similar), were proposed.   

Research efforts focus on a compromise between with two aspects: generate sufficient number of samples for the surrogate model to be efficient and minimize that number because of the cost to generate them. Thus, one natural choice to address those concerns is adaptive sampling. Instead of generating a massive, and probably redundant, amount of data, a trend is to provide a more moderate amount of representative data for the underlying problem at hand. Indeed, as several authors observed \cite{liu2018survey,ajdari2014adaptive}, pure one-shot experimental design suffers from its inability to capture special characteristics of the surface response manifold. Besides, the sufficient number and suitable distribution of design points required to do so are not usually known beforehand. To circumvent these limitations, sequential experimental design is used to explore the input space more efficiently and add iteratively sample points that are likely to improve the surface response manifold. Note that a one shot procedure is still used to initialize the sequential experimental design. 

The two search strategies classically employed are explorative and exploitative, as mentioned in the introduction, however several authors \cite{liu2018survey,ajdari2014adaptive,jiang2018adaptive} propose to combine both. The underlying idea is to find an acceptable trade-off between the conflicting local exploitation and global exploration, expressed as a scoring function based on the combination of both terms. Indeed, at a given step, the search algorithm aims to find a new point 1-whose close vicinity is not part of the samples already available and 2-that contains significant information about the surface’s characteristics to improve the underlying problem representativity. However, in practice, it remains difficult to find a point that meet simultaneously these two conditions, thus the aggregation procedure.

Interestingly enough, both \cite{ajdari2014adaptive,jiang2018adaptive} rely on an sampling algorithm based on a Delaunay triangulation for the decomposition of the search space initialized by a discrete search space. The former evaluates the scores of each triangle as follows: the area computation for the exploration, the total variation of response between nodes for the exploitation and a linear combination of both for the total score. They also propose several enhancement to avoid clustering of points in a region. The latter proceed as follows for each triangle: compute the area, compute the prediction error of the metamodel at the centroid and finally calculate the weights based on entropy weight method for the global score.
Regarding the high-dimensional adaptive sampling, for which the required number of points grow exponentially with dimensions, the survey \cite{liu2018survey} provides several insights. Nevertheless, because of the “curse of dimensionality” and the inability to leads properly the adaptive algorithm because of the lack of samples, it remains an intractable task according to the authors.

As for triangulation technique on higher dimension manifold, one would encounter a fundamental question whether any topological manifold is triangulable. Lashof et al. and Whitehead~\cite{lashof1969triangulation,whitehead1940c1} has partially answer this question. Every smooth manifold is triangulable. Though, we can reach a situation which the data manifold is not smooth. Moise and Radó~\cite{moise1952affine,rado1925begriff} have proven that every n-manifold such that $n<4$ is smooth, and Freedman and Kirby et al.~\cite{freedman1982topology,kirby1969triangulation} have shown that  for $n\geq 4$, there are some manifolds which are not triangulable. However, Boissonnat et al.~\cite{boissonnat2018delaunay} proposed perturbation data point algorithm that can be used to triangulate the compact Riemannien manifold and can extend to the higher dimension manifold which may tackle this problem.

Regarding projector operators, there are many methods to construct one in order to reduce the high dimensional input data manifold. The well known one is t-SNE method which is based on a variation stochastic embedded \cite{van2008visualizing}. This method can efficiently represent a single map that depicts data structure at different scales. It is known to be used for high dimension data visualization. However, this method is less efficient to inverse data point from the embedded lower dimension space to the original one. Another well known reduced component method is locally linear embedded (LLE). This method is based the non-linear spectral dimensionality reduction \cite{roweis2000nonlinear}. It focuses on preserving the local data structure in the embedded space. The method relies on proximity of the data points in the data manifold. It searches to do local fitting, hence for a far away data points, it will become farther.

\subsection{Main contribution}
This work proposes a new adaptive sampling algorithm based on physical models. Based on solutions of a physical solver corresponding to each point, and by use of triangulation technique, simplicial complex (depending on the number of dimensions in parametric space) are created. New points are introduced on the barycenters of the created elements and then they are added to the initial input data if a threshold is satisfied. The proposed algorithm uses a technique that allows to carry out the triangulation for higher dimensions. In addition, while the method described in this work is applied to the construction of a harmonic transport metamodel, its core principles aim to be generic and could be applied to any problems.

The remainder of this paper is organized as follows. In section II, a new methodology for adaptive sampling is introduced. Next, this adaptive approach is applied to generate data points for a harmonic transport equation in section III. In what follows, the comparison between LHS and the proposed method on data input distribution and training of fully connected neural network's performance are carried out, the results are shown and the performance of the proposed adaptive sampling approach is discussed. Finally, the conclusion remarks are presented.

\section{Methodology}
\subsection{Adaptive sampling algorithm}
We start by considering a physical model described by an implicite function or a partial differential equation:
\begin{equation}
F(s,y(x))=0
\label{eq_pde}
\end{equation}

where
\begin{itemize}
\item $ s \in \mathcal S \subset \mathbf R^N$ where $\mathcal S$ is a set of the equation parameters as an input dataset with dimension $N$;
\item $y(x) \in \mathcal Y \subset \mathbf R^M$ where $\mathcal Y$ is a set of the solution to the equation (\ref{eq_pde}) with dimension $M$;
\item $x \in \Omega$ where $\Omega$ is a spatial domain. 
\end{itemize}

The equation (\ref{eq_pde}) can be solved using discretized methods. Without the abuse of notation, we denote its discretized solution as $y \in \mathcal Y \subset \mathbf R^M$ where $\mathcal Y$ is a set of discretized solution as an output dataset with dimension $M$ corresponding to the input dataset $\mathcal S$. Therefore, the equation (\ref{eq_pde}) can be written as: 

\begin{equation}
F(s,y) \leq \epsilon\ \textrm{for}\ (s,y)\in \mathcal S\times\mathcal Y
\label{eq_pde_d}
\end{equation}

Where $\epsilon$ is a tolerance constant set for solving the equation (\ref{eq_pde}).

We define a set of Euclidean coordinate point $\mathcal P = \mathcal S \times \mathcal Y $ and a function $\chi(x) = 0\ \textrm{if}\  x\leq0\ \textrm{and}\ \chi(x)=1\ \textrm{else}$. Therefore, the response manifold corresponding to the dataset $(X,Y)$ can be written as follows:

\begin{equation}
\mathcal{M}=\{P\in \mathcal{P} | \chi (F(P) -  \epsilon) = 0\}
\label{eq_manifold}
\end{equation}

The proposed method is based on the triangulation of the response manifold (\ref{eq_manifold}) into a set of simplicial complex. Then, we build a barycenter set $\mathbf P_G$ corresponding to the simplicial complex set .

Even though $\mathcal P_G$ 

is a set of the manifold barycenters, we cannot guarantee that $ \mathcal P_G \subset \mathcal{M}$. Indeed, if $P_G \in \mathcal{M} \cap  \mathcal P_G $, it means that output is a simple linear combination of the solution at the nodes of the simplicial complex while being also a solution of equation (\ref{eq_pde}) however there is no reason it should be the case.

We create a manifold point $\tilde{P}_G \in \tilde{\mathcal{P}}_G$ where $\tilde{\mathcal{P}}_G$ is the set of the solution of the equation (\ref{eq_pde_d}) corresponding to the barycenter set so that $\tilde{P}_G \in \mathcal{M}$.

Next, we define a condition whether one needs to add the solution point $\tilde P_G$ at the barycenter to the original manifold. To do so, we define a metric as follows:
\begin{equation}
\rho_G=F(P_G)
\label{eq_metric}
\end{equation}

With respect to the defined metric $\rho_G$, we can assess whether it is relevant to add a new point to the manifold by introducing  a threshold $\rho_{\epsilon}$ at the barycenter coordinate position. 

The drawback of this criteria is that if we set $\rho_{\epsilon}$ very small, many points are likely to be added to the manifold. If we let that happen, the number of points will be increasing exponentially. Hence, we have to find a way to control their number. We propose a relaxation threshold which allows to first add some points with the largest precision error. This threshold is defined as follows:

\begin{equation}
\rho_{\epsilon,k} = \frac{\rho_{\epsilon,0}}{\lambda^k}\label{eq_relaxation}
\end{equation}

Where
\begin{itemize}
\item $k$ is the iteration number for manifold point generation ;
\item $\rho_{\epsilon,0}$ is the initial threshold;
\item $\lambda > 1$ is a decayed factor. 
\end{itemize}

We nearly have all the ingredients for the method. The only step remaining is to define how to start the algorithm. To do so, the initial input data set equation parameters is obtained by the corner coordinate points of the bounding box's input data set given by the cartesian product of the minimal/maximal value of each parameter. For instance, let $p_1$ and $p_2$ be two equation parameters, the input dataset contains four points: 
\begin{equation}
\begin{array}{ll}
P_{Initial}=&\{(min(p_1),min(p_2)),\\
&(min(p_1),max(p_2)),\\
&(max(p_1),min(p_2)),\\
&(max(p_1),max(p_2))\}.
\end{array}
\end{equation}
It is a one-shot sampling that generates the minimum necessary data set to avoid creating non-necessary points by parametric space exploration, because of the high cost of simulation. Then, the corresponding discretized solutions is simulated to create a manifold's initial points.

The main purpose of the developed algorithm is to generate a small input data for which the precision is satisfied. However, to reach a very good precision, large number of the solution at barycenters or large amount of computational resources may be needed. To overcome this, we can enrich the stopping criteria by setting the time or sample number limit.

As mentioned in the state of the art that for manifold's dimension lower than 4, any topological manifold is triangulable. Therefore, We focus the scope of this work on the surface manifold in the rest of the paper.

\subsection{Reduced surface manifold}\label{sec:reducedmanifold}

From the original manifold,  we may reduce the dimension of the manifold coordinate's point based on the reduced dimension of the input and output data. This is called the reduced order manifold.

We detail in the following paragraph how to construct the projector of the input data $\mu: \mathbf{R}^N \rightarrow \mathbf{R}^2$ and $p\rightarrow \bar{p}$ and its pseudo inverse which is defined as $\mu^{-1}:\mathbf{R}^2 \rightarrow \mathbf{R}^N$ and $||\mu^{-1}(\bar{p}) - p||_{L^2}<\epsilon$ for sufficiently small $\epsilon$. This construction allows to proof the lemma \ref{lemma:proj} for the theorem \ref{thm:reducedmanifold}.

It should be noted that, in the case of $N=2$, the projector $\mu$ is an identity. It is the same for the case $N=1$, but it is not included in this study.

A recent developed method called Uniform Manifold Approximation Projection (UMAP) has been shown to be more efficient than t-SNE method according to \cite{leland2018uniform}. We use UMAP as the projector $\mu$. Although UMAP provides the inverse transform of the lower dimension data of the embedded space to the original data, it creates some bias in the original data. Even with very small global error ($||\mu^{-1}(\bar{p}) - p||_{L_2}$), the local error ($|\mu^{-1}(\bar{p})_i - p_i|, \mu^{-1}(\bar{p})_i\in \mu^{-1}(\bar{p}), p_i \in p$) can be large.
Therefore, it can generate the input data outside the original input data domain which is unacceptable for the physical simulation. Hence, we propose the inversion of $\mu$ based on the preprocessing data grid. The main idea is to sample a data grid using uniform or Cartesian sampler to generate a refine input data. Then, we apply UMAP on the whole input data grid to reduce the original high dimension data to two dimension one. Thus, for each two dimensional data point on the grid, there is a corresponding original dimensional data point. If, for example, a new two dimensional point is employed, thus we need the corresponding original data for the simulation. The inverse projection $\mu^{-1}$ injects the new two dimensional data point to the closest two dimensional data point on the grid, hence we can obtain the corresponding original data point on the grid for the simulation. Doing so, we guarantee the input data is always in the input data domain for consistent physical simulation. However, global error ($||\mu^{-1}(\bar{p})-p||_{L^2}$) can be generated on the position of the coordinate point. This error has an upper bound depending on the size of the grid $h$.\textbf{When $h$ is sufficiently small, then $||\mu^{-1}(\bar{p})-p||_{L^2}<\epsilon$}.

\begin{lemma}\label{lemma:proj}
For a high dimension continuous manifold $\mathcal{M}$ defined in (\ref{eq_manifold}) , there exists a projector $\pi:\mathbf{R}^N\times\mathbf{R}^M\rightarrow \mathbf{R}^2\times\mathbf{R}$ and $P\rightarrow \bar{P}$ for $P\in \mathcal{M}$ and $\bar{P} = \pi(P)$ such that its pseudo inverse $\pi^{-1}:\mathbf{R}^2\times\mathbf{R}\longrightarrow \mathbf{R}^N\times\mathbf{R}^M$ and $||\pi^{-1}(\bar{P})-P||_{L_2}<\epsilon$ for sufficiently small $\epsilon$.
\label{lm_pi}
\end{lemma}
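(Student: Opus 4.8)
The plan is to realise $\pi$ as the Cartesian product of two separately built projectors: one that compresses the parameter (input) factor $\mathbf{R}^N$ to $\mathbf{R}^2$, and one that compresses the solution (output) factor $\mathbf{R}^M$ to $\mathbf{R}$. Writing a manifold point as $P=(s,y)$ with $s\in\mathbf{R}^N$ and $y\in\mathbf{R}^M$, I would set $\pi(P)=(\mu(s),\nu(y))$, so that $\bar{P}=(\bar{s},\bar{y})$ with $\bar{s}=\mu(s)$ and $\bar{y}=\nu(y)$; here $\mu$ is exactly the UMAP-plus-grid map already constructed in Section \ref{sec:reducedmanifold}, and $\nu:\mathbf{R}^M\to\mathbf{R}$ is an analogous one-dimensional reduction of the output endowed with the same grid-based inversion. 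The pseudo-inverse is taken componentwise, $\pi^{-1}(\bar{s},\bar{y})=(\mu^{-1}(\bar{s}),\nu^{-1}(\bar{y}))$, so that all that remains is to bound its reconstruction error on $\mathcal{M}$.

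The second step is the error split. Since the two factors live in orthogonal complementary subspaces of $\mathbf{R}^N\times\mathbf{R}^M$, the squared Euclidean reconstruction error decomposes as
\[ \|\pi^{-1}(\bar{P})-P\|_{L_2}^2=\|\mu^{-1}(\bar{s})-s\|_{L_2}^2+\|\nu^{-1}(\bar{y})-y\|_{L_2}^2 . \]
For the first term I would invoke the bound already asserted in Section \ref{sec:reducedmanifold}: for a grid of spacing $h_\mu$ sufficiently small the input reconstruction error is below any prescribed tolerance. For the second term I would establish the identical statement for $\nu$, with its own grid spacing $h_\nu$. Choosing both spacings small enough that each term is at most $\epsilon^2/2$ then yields $\|\pi^{-1}(\bar{P})-P\|_{L_2}<\epsilon$, which is the claim.

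The real work is hidden in the bold assertion of Section \ref{sec:reducedmanifold}, namely that the grid-based nearest-neighbour inversion error tends to zero with the grid spacing; this is the step I expect to be the main obstacle, since UMAP carries no closed-form inverse nor an a priori Lipschitz guarantee. To make it rigorous I would assume $\mathcal{M}$ is compact (as is natural for a response manifold over a bounded parameter box) so that a finite Cartesian grid can be made $h$-dense in the input domain, and I would assume the embedding $\mu$ is uniformly continuous, which compactness upgrades to a modulus of continuity $\omega_\mu$ with $\omega_\mu(h)\to 0$ as $h\to 0$. The inversion assigns to a query $\bar{s}$ the original preimage of the nearest grid point in the embedded space; uniform continuity of $\mu$ together with $h$-density of the grid forces that nearest embedded point to correspond to an original point within $\omega_\mu(h)$ of $s$, so the local error is controlled by $\omega_\mu(h)$. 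The same argument applied to $\nu$ closes the bound, and the construction is consistent with the grid-restriction guarantee that keeps every reconstructed input inside the admissible domain required by the physical solver.
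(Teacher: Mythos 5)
Your overall architecture matches the paper's: both build $\pi=(\mu,\nu)$ componentwise, take the pseudo-inverse componentwise, and combine the two reconstruction errors through the Pythagorean split $||\pi^{-1}(\bar P)-P||_{L^2}^2=||\mu^{-1}(\bar s)-s||_{L^2}^2+||\nu^{-1}(\bar y)-y||_{L^2}^2$. The input factor is handled identically (UMAP plus the grid-based nearest-neighbour inversion of Section \ref{sec:reducedmanifold}). The divergence, and the gap, is in the output factor. You treat $\nu:\mathbf{R}^M\to\mathbf{R}$ symmetrically with $\mu$, i.e.\ as an embedding that is to be inverted by a nearest-neighbour rule on a grid in the one-dimensional embedded space. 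But the output set traced out by the response manifold is generically a continuous image of the $N$-dimensional parameter box, so for $N\geq 2$ no continuous map into $\mathbf{R}$ can be injective on it; distinct, far-apart outputs must collide (or come arbitrarily close) in the embedded line, and the nearest embedded grid point can then correspond to an original output far from $y$. Refining the grid does not cure this, so the claim that the $\nu$-reconstruction error tends to zero with $h_\nu$ fails. Relatedly, even for $\mu$ your rigor-filling argument runs the continuity in the wrong direction: uniform continuity of the embedding guarantees that close originals have close images, whereas the nearest-neighbour inversion needs the converse (continuity of the inverse on the image, i.e.\ that the embedding is a homeomorphism onto its image restricted to the data set). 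The paper simply asserts that bound for $\mu$, so you are not worse off there, but it is the same issue.

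The paper avoids the output-factor obstruction entirely by \emph{not} inverting $\nu$ as an embedding. It takes $\nu(y)=||y||_{L^2}$ (manifestly non-injective, which does not matter) and defines the ``pseudo inverse'' of the output component through the physical model: given $\bar P$, it first reconstructs $\tilde p=\mu^{-1}(\bar p)$ and then lets $\tilde y$ be the solution of $F(\tilde p,\tilde y)\leq\epsilon$, so that $\tilde P=(\tilde p,\tilde y)\in\mathcal{M}$ by construction. The output error $||\tilde y-y||_{L^2}$ is then controlled by $||\tilde p-p||_{L^2}<\epsilon$ together with continuity of the solution operator --- i.e.\ the fact that $\mathcal{M}$ is a graph over the parameter space --- rather than by any grid on $\mathbf{R}^M$. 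That is the key idea your proposal is missing; with it, your error split goes through and yields the paper's $\sqrt{2}\,\epsilon$ bound (the factor being absorbed into ``sufficiently small $\epsilon$'').
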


\begin{theorem}\label{thm:reducedmanifold}
For a high dimension continuous manifold $\mathcal{M}$ defined in (\ref{eq_manifold}) , there exists a corresponding reduced surface manifold $\bar{\mathcal{M}}$.
\label{th_manifold_reduced}
\end{theorem}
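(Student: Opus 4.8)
The plan is to realize the reduced manifold as the direct image of $\mathcal{M}$ under the projector supplied by Lemma \ref{lemma:proj}. Concretely, I would set
\begin{equation}
\bar{\mathcal{M}} := \pi(\mathcal{M}) = \{\,\bar{P} = \pi(P)\ :\ P \in \mathcal{M}\,\} \subset \mathbf{R}^2 \times \mathbf{R},
\end{equation}
and then argue that $\bar{\mathcal{M}}$ is a two-dimensional topological manifold sitting in $\mathbf{R}^2\times\mathbf{R}\cong\mathbf{R}^3$, so that it is indeed a surface (codimension one in $\mathbf{R}^3$). Since the scope has been restricted to surface manifolds, $\mathcal{M}$ is a continuous $2$-manifold, and the whole question reduces to showing that passing to the image under $\pi$ does not destroy this structure.

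The core step is to verify that the restriction $\pi|_{\mathcal{M}}:\mathcal{M}\to\bar{\mathcal{M}}$ is a homeomorphism onto its image. First I would record that $\pi$ is continuous and that, by the pseudo-inverse estimate of Lemma \ref{lemma:proj}, $\pi^{-1}$ is a continuous left inverse up to an error of size $\epsilon$, i.e. $||\pi^{-1}(\pi(P)) - P||_{L_2} < \epsilon$ for every $P \in \mathcal{M}$. Choosing $\epsilon$ smaller than half the minimal separation that two genuinely distinct branches of $\mathcal{M}$ can have after projection forces $\pi|_{\mathcal{M}}$ to be injective: if $\pi(P)=\pi(P')$ then $||P-P'||_{L_2}\le ||P-\pi^{-1}(\pi(P))||_{L_2}+||\pi^{-1}(\pi(P'))-P'||_{L_2}<2\epsilon$, so $P$ and $P'$ must coincide once $\epsilon$ is taken small enough. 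Because $\mathcal{S}$ and $\mathcal{Y}$ are bounded, $\mathcal{M}$ is compact, and a continuous injection from a compact space into a Hausdorff space is automatically a homeomorphism onto its image; hence $\pi|_{\mathcal{M}}$ is an embedding.

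With the embedding in hand, I would conclude by transporting the manifold structure: $\bar{\mathcal{M}}$ is homeomorphic to $\mathcal{M}$, which is a $2$-surface, so $\bar{\mathcal{M}}$ is itself a topological $2$-manifold, and the pseudo-inverse $\pi^{-1}$ provides the explicit charts recovering points of $\mathcal{M}$ from points of $\bar{\mathcal{M}}$ within accuracy $\epsilon$. This exhibits the desired reduced surface manifold.

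The main obstacle I anticipate is exactly the injectivity/embedding step. The projector constructed via UMAP together with the grid-based inversion only guarantees an approximate recovery with a global error controlled by the grid size $h$; it does not, a priori, guarantee that two distinct points of $\mathcal{M}$ stay distinct after projection. Making the separation argument rigorous therefore requires quantifying how the grid resolution $h$ (and hence $\epsilon$) must be chosen relative to the geometry of $\mathcal{M}$ --- essentially a reach or injectivity-radius condition on the embedded surface --- so that the $2\epsilon$ collision bound genuinely collapses only identical points. Invoking invariance of domain (a continuous locally injective map between manifolds of equal dimension is open) is the clean way to upgrade local injectivity to a global embedding, but one must first secure that local injectivity from the approximation estimate.
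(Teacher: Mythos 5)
Your construction takes a genuinely different route from the paper, and it contains a gap that you yourself identify but do not close. You define $\bar{\mathcal{M}} = \pi(\mathcal{M})$ and try to show that $\pi|_{\mathcal{M}}$ is an embedding, so that the image inherits the structure of a topological $2$-manifold. The injectivity step is the problem: Lemma \ref{lemma:proj} only gives $||\pi^{-1}(\pi(P)) - P||_{L^2} < \epsilon$, hence $\pi(P) = \pi(P')$ implies $||P - P'||_{L^2} < 2\epsilon$; collapsing this to $P = P'$ requires a positive lower bound (a reach or separation condition) on how close two distinct points of $\mathcal{M}$ with the same projection can be, and no such hypothesis appears anywhere in the paper --- indeed a UMAP projection from $\mathbf{R}^N\times\mathbf{R}^M$ down to $\mathbf{R}^2\times\mathbf{R}$ has no reason to be injective on $\mathcal{M}$. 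Since your entire conclusion (that $\bar{\mathcal{M}}$ is a surface homeomorphic to $\mathcal{M}$) rests on this embedding step, the argument as written does not go through.

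The paper avoids the issue entirely by never asking $\pi$ to be injective. It sets $\bar{F} = F\circ\pi^{-1}$ and shows, using only the error bound of Lemma \ref{lemma:proj} together with the continuity of $F$, that $|\bar{F}(\bar{P})| \leq |F(P)| + |F(P) - F(\pi^{-1}(\bar{P}))| < 2\epsilon$ for $\bar{P} = \pi(P)$ with $P\in\mathcal{M}$. It then \emph{defines} the reduced manifold implicitly, $\bar{\mathcal{M}}=\{\bar{P} \mid \chi(\bar{F}(\bar{P}) - 2\epsilon)= 0\}$, i.e.\ the same tolerance-based form as equation (\ref{eq_manifold}) with the relaxed tolerance $2\epsilon$. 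In that reading, ``reduced surface manifold'' means a set of reduced coordinates satisfying the reduced implicit equation within tolerance, not a topological $2$-manifold in the classical sense, so no embedding, compactness, or invariance-of-domain argument is needed. If you want to keep your stronger conclusion you must add and justify the reach/injectivity-radius hypothesis you flagged; otherwise the weaker, implicit-set formulation is the one that Lemma \ref{lemma:proj} actually supports.
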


\begin{corollary}\label{cor:reducedmanifold}
For a high dimension continuous manifold defined in (\ref{eq_manifold}) , there exists a corresponding reduced order manifold for any lower dimension.
\label{th_manifold_reduced_general}
\end{corollary}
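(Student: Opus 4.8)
The plan is to observe that the proof of Theorem~\ref{thm:reducedmanifold} never used the specific value of the target dimension; the reduction to a surface (i.e.\ to $\mathbf{R}^2\times\mathbf{R}$) was merely the case that is convenient to visualize and to triangulate. Accordingly, I would prove the corollary by re-running the same construction with the reduced dimension left as a free parameter, so that the surface manifold of the theorem becomes one instance of a family indexed by the chosen output dimensions.

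First, I would generalize Lemma~\ref{lemma:proj}. Its input projector $\mu:\mathbf{R}^N\rightarrow\mathbf{R}^2$ is built by sampling a Cartesian/uniform data grid in the input domain, applying UMAP to embed that grid into the target space, and defining the pseudo-inverse $\mu^{-1}$ by snapping a query point to its nearest grid image and returning the associated original point. Nothing in this recipe is tied to the number $2$: UMAP embeds into $\mathbf{R}^d$ for any chosen $d$, and the nearest-grid-point inversion is defined identically. I would therefore fix arbitrary $d\le N$ and $m\le M$, construct $\mu_d:\mathbf{R}^N\rightarrow\mathbf{R}^d$, and build an analogous projector $\nu_m:\mathbf{R}^M\rightarrow\mathbf{R}^m$ for the output component. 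The key quantitative input is the grid-resolution estimate already invoked in the text: the global inversion error is controlled by the grid size $h$, so that $\|\mu_d^{-1}(\bar p)-p\|_{L_2}<\epsilon$ once $h$ is small enough. Assembling the two maps gives $\pi_d:\mathbf{R}^N\times\mathbf{R}^M\rightarrow\mathbf{R}^d\times\mathbf{R}^m$ with the same pseudo-inverse bound, which is exactly the statement of Lemma~\ref{lemma:proj} with $(2,1)$ replaced by $(d,m)$.

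With this generalized projector in hand, I would then apply the argument of Theorem~\ref{thm:reducedmanifold} verbatim: the image $\bar{\mathcal M}_{d,m}=\pi_d(\mathcal M)$ is the desired reduced order manifold, and the continuity of $\mathcal M$ together with the $\epsilon$-bound on $\pi_d^{-1}$ guarantees that $\bar{\mathcal M}_{d,m}$ faithfully represents $\mathcal M$ in the lower dimension $d+m<N+M$. Since $d$ and $m$ were arbitrary, this yields a reduced manifold for any lower dimension, as claimed.

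The step I expect to be the main obstacle is the uniform control of the inversion error across dimensions. The bound $\|\pi_d^{-1}(\bar P)-P\|_{L_2}<\epsilon$ rests on the grid being fine enough, but the number of grid points needed to keep $h$ below a given threshold grows exponentially with $N$ (the curse of dimensionality flagged earlier in the paper), and one must also check that the nearest-neighbour snapping remains single-valued and stable near any folds that UMAP may introduce. Making the dependence of the admissible $h$ on $d$ explicit --- or at least arguing that a finite $h$ suffices for each fixed $d$ --- is where the genuine care is required; the remainder is a direct transcription of the surface case.
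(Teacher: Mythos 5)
Your proposal is correct and follows essentially the same route as the paper, whose entire proof of the corollary is the single sentence that the argument of Theorem~\ref{thm:reducedmanifold} carries over unchanged; you simply make explicit what that means (UMAP into $\mathbf{R}^d$, the analogous grid-based pseudo-inverse for the output, and the same composition bound). Your closing caveat about the grid resolution $h$ and the curse of dimensionality is a fair observation, but it is a gap the paper itself leaves open rather than one introduced by your argument.
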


The proofs of the lemma \ref{lemma:proj} and theorem \ref{thm:reducedmanifold} are found in Appendix \ref{sec:theorem}.

\section{Results and discussions}
\subsection{Harmonic transport problem}\label{subsec:harmtran}

The Goldstein equation~\cite{bensalah2022mathematical} is used to model the sound propagation. A simplification of this model is to solve a 1D harmonic transport problem with a high variation source function.

The harmonic transport equation is described by the following partial differential equation:
\begin{equation}
F(p,y(x)) = m\frac{\partial y}{\partial x} - i ky(x) - g(x) = 0
\label{eq_gsd}
\end{equation}
where \\
\begin{itemize}
\item $i = \sqrt{-1}$ ;
\item $m$ is mach number which describes the velocity of the propagate wave function;
\item $k$ is wave number;
\item $g$ is the source function which describes the external force pushing on the wave medium;
\item $x = [0,1]$  is spatial domain;
\item $y(x=0)=0$ is the boundary condition.
\end{itemize}
The discretized solution of the equation (\ref{eq_gsd}) can be written as $y_R = (Re(y),Im(y)) \in \mathbf{R}^{M}$. Two cases of the input data are carried out in this study. 

For the first case, we focus on the low dimensional input data where the source function is presented by a parametric function $g(x) = a e^{\alpha xi} + e^{(\alpha xi - \frac{(x-x_m)^2}{2\sigma^2})}$. Thus, the parameters of the equation (\ref{eq_gsd}) is $(m,x_m,k,a,\alpha,\sigma)$. We only vary two parameters $m$ and $x_m$ and fix the others. So, the input and output data are $s=(m,x_m) \in \mathbf{R}^2$ and $y_R\in \mathbf{R}^{M}$ respectively. 

For the second case, we work on the high dimensional input data where $m=m(x)$ and $g_R=(Re(g),Im(g))$ are represented by a parametric Chebeshev polynomial that vary at each discretized spatial position and the parameter $k$ is fixed. So, the input and output data are $s=(m,g_R)\in \mathbf{R}^N, N=\frac{3}{2}M$ and $y_R\in \mathbf{R}^{M}$ respectively.

\subsection{Influences of ASADG's parameters} 

This section illustrates the influence of the iteration number and initial threshold on ASADG's outcomes. We consider the low dimensional input data where the source function is presented by a parametric function as in the first case described in the section \ref{subsec:harmtran}.

Fig.\@ \ref{fig_manifold} shows the evolution of the data manifold for different algorithm's iterations. At the initial step, the initial points of the input data are determined. The surface manifold consists of four initial points which are triangulated into two triangles (Fig.\@ \ref{fig_manifold}a). Next step, the barycenters of the two triangle are computed. It is depicted in Fig.\@ \ref{fig_manifold}b and c as red dot points and the corresponding solution at the barycenters are generated through physical simulations. These points shown as the blue dot points are added to the manifold. The generated points and its corresponding initial barycenters are inside the red dashed squares. Again, the manifold triangulation are performed. This procedure are conducted until the stopping criteria occurred. Illustrated in Fig.\@ \ref{fig_manifold}d, the isolated red points display the previous step barycenters. Their metric is larger than the relaxation threshold. Therefore, these points are eliminated before the triangulation preparation for the next step point generation. Fig.\@ \ref{fig_manifold_sample}a shows the 3000 point triangulated manifold. Their distribution is highly concentrated where the manifold curvature is large. This distribution can lead to an approximated representation of data manifold with this sampling number as comparing to the LHS sampling of 50000 points illustrated in Fig. \@ \ref{fig_manifold_sample}b. The ASADG manifold points can capture the wavelet surface at the high curvature part of the manifold as shown in the dashed red box at the right side of Fig. \@ \ref{fig_manifold_sample}a. Moreover, at the lower curvature part of the manifold, the manifold feature is also be able to capture the wavelet form but less representative than that in the high curvature zone. 

Fig.\@ \ref{fig_manifold_param}a displays evolution's of the number of sampling points (red curve) and the mean of metric value (blue curve) according to the generation step. We observe that the number of sampling points is slowly increasing thank to the decay factor and the initial threshold; though at iteration number 20, it starts exponentially growing. On the contrary, the metric mean value are rapidly decreasing. Even though the metric has not yet reach its threshold, the mean metric value seems to be stable at iteration number 20 while the number of sampling points continues increasing. The metric stability can be a criteria to determine the data manifold representativity. 

The initial threshold for a given decay factor ($\lambda=1.5$) plays an important role in reducing the number of added points at some generation iteration. In this case, the number of iteration is of $20$, and the upper limit of the total generated points is of $10000$. Fig.\@ \ref{fig_manifold_param}b illustrates the number of sampling points according to the initial threshold. When the initial threshold is $10^{-2}$ more than $10000$ points are added to the manifold at the end of generation iteration. This can cause a heavy computational cost involving physical simulation. Moreover, the number of the barycenters at the next step is possibly exponentially growing.

When the initial threshold is large, less generated points are added to the manifold. For instance, if the initial threshold is of $10^3$ 
, there is only $1116$ total generated points added to the manifold at the end of generation iteration. It'd better to have a sufficiently large initial threshold so that the number of generated point rate's are slowly increased. 

\begin{figure}[ht]
\centering
\begin{subfigure}{0.5\textwidth}
  \centering
\includegraphics[height=4.cm]{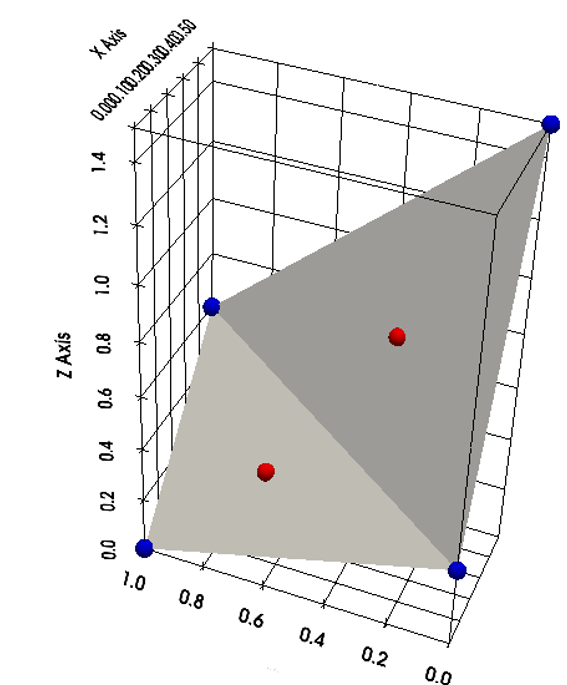}
  \caption{}
  \label{fig:sub1_fig_manifold}
\end{subfigure}%
\begin{subfigure}{0.5\textwidth}
  \centering
\includegraphics[height=4.cm]{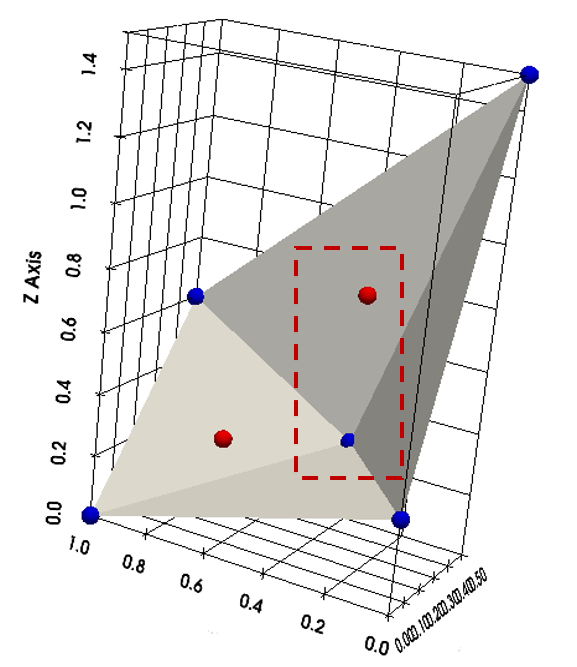}
  \caption{}
  \label{fig:sub2_fig_manifold}
\end{subfigure}%
\vfill
\begin{subfigure}{0.5\textwidth}
  \centering
\includegraphics[height=3.5cm]{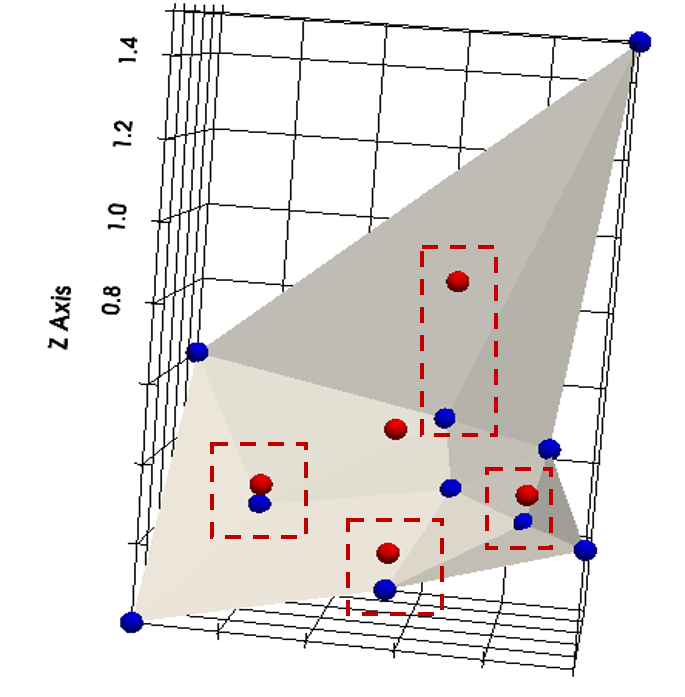}
  \caption{}
  \label{fig:sub3_fig_manifold}
\end{subfigure}%
\begin{subfigure}{0.5\textwidth}
  \centering
\includegraphics[height=4.2cm]{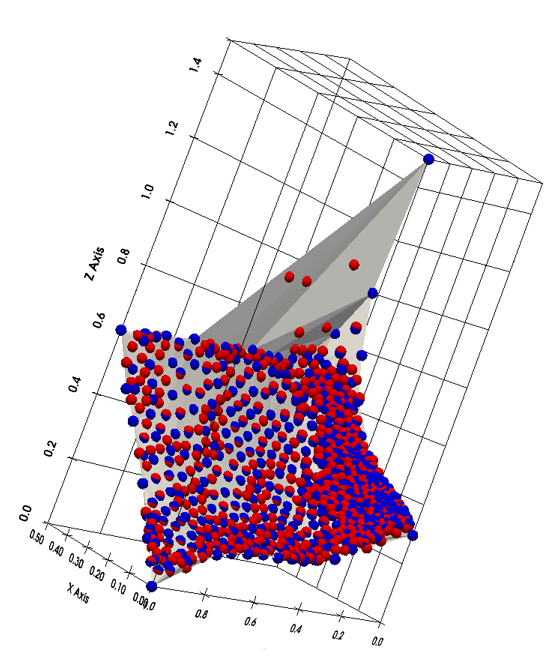}
  \caption{}
  \label{fig:sub4_fig_manifold}
\end{subfigure}%
\caption{Evolution of surface manifold according iteration number: (a) initialization, (b) 10 iterations, (c) 12 iterations and (d) 17 iterations.} \label{fig_manifold}
\end{figure}

\begin{figure}[ht]
\begin{subfigure}{0.5\textwidth}
  \centering
\includegraphics[height=4.5cm]{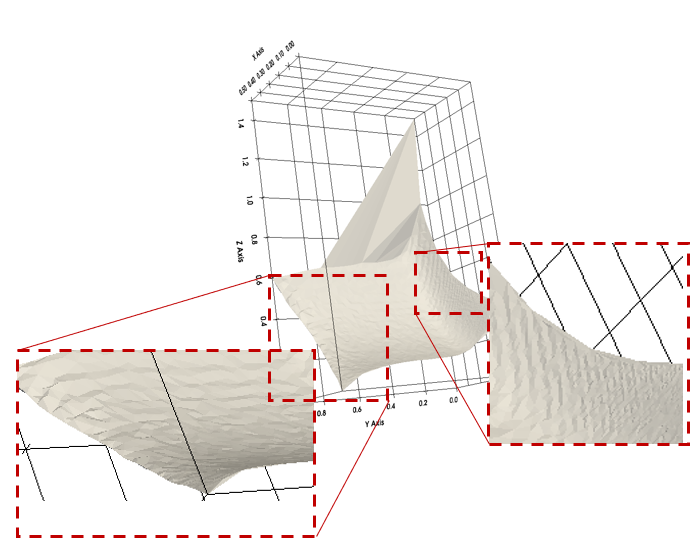}
  \caption{}
  \label{fig:sub5_fig_manifold}
\end{subfigure}%
\begin{subfigure}{0.5\textwidth}
  \centering
\includegraphics[height=4.5cm]{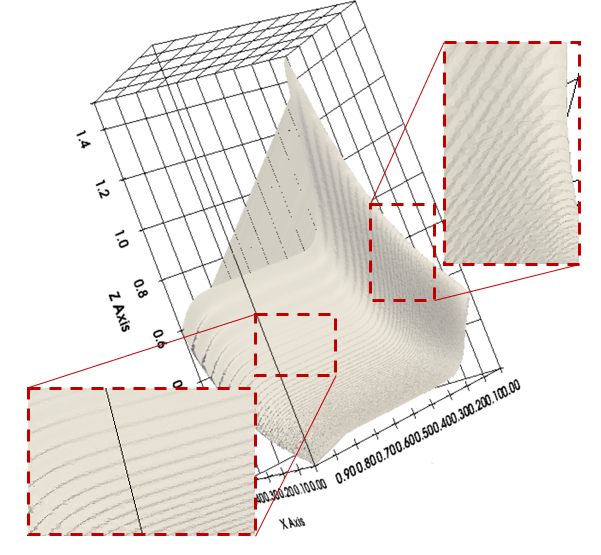}
  \caption{}
  \label{fig:sub6_fig_manifold}
\end{subfigure}%
\caption{Evolution of surface manifold according iteration number: (a) ASADG with 3000 points and (b) LHS with 50000 points.} \label{fig_manifold_sample}
\end{figure}

\begin{figure}[ht]
\centering
\begin{subfigure}{0.5\textwidth}
  \centering
\includegraphics[height=4.cm]{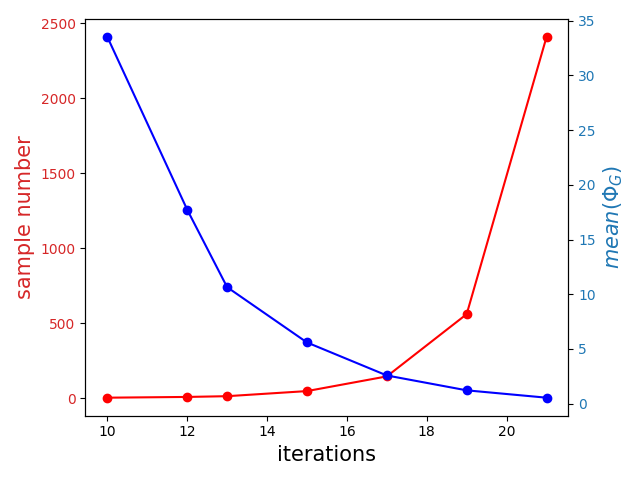}
  \caption{}
  \label{fig:sub1_fig_manifold_param}
\end{subfigure}%
\begin{subfigure}{0.5\textwidth}
  \centering
\includegraphics[height=4.cm]{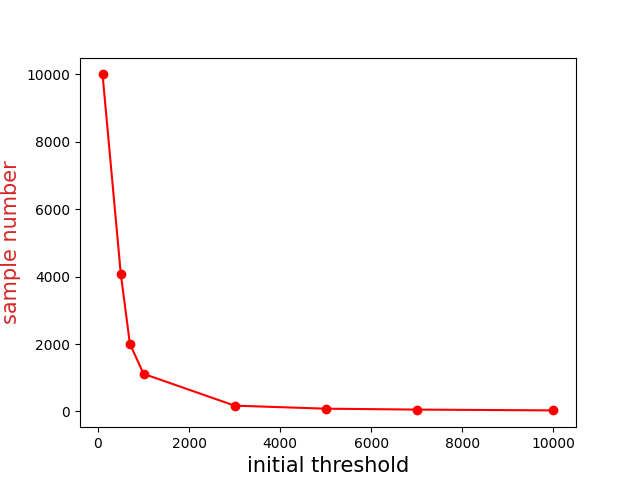}
  \caption{}
  \label{fig:sub2_fig_manifold_param}
\end{subfigure}%
\caption{Evolution of  (a) the sampling number (red curve) and the metric mean value at the barycenters (blue curve) according to each generation step and (b) the sampling number according to relaxation constant for a fixed number of iteration.} \label{fig_manifold_param}
\end{figure}

\subsection{Comparison of different sampling methods}\label{subsec:compsamplers}

This section reports the comparison between ASADG sampler and LHS sampler for the low dimensional data and uniform sampler for the high dimensional data.
\subsubsection{Low dimensional input data}
We select an ASADG sampler from the previous section with number of sampling points of 500. On the other hand, the LHS sampler is applied for the same input data domain with the same number of sampling points.

Fig.\@ \ref{fig_method_dim_low}a displays the distribution of the sample points in the input data domain using ASADG (red points) and LHS (blue points). Sample points using LHS are well distributed over the input data domain. On the contrary, those using ASADG are centred at the lower-right part, and sparse elsewhere. It is due to the well known property of LHS which characterize the optimal distribution of the sampler points without any knowledge of data manifold, on the other hand, the distribution of the sample points by ASADG respects the data manifold feature. 

\begin{figure}[ht]
\centering
\begin{subfigure}{0.5\textwidth}
  \centering
\includegraphics[height=4cm]{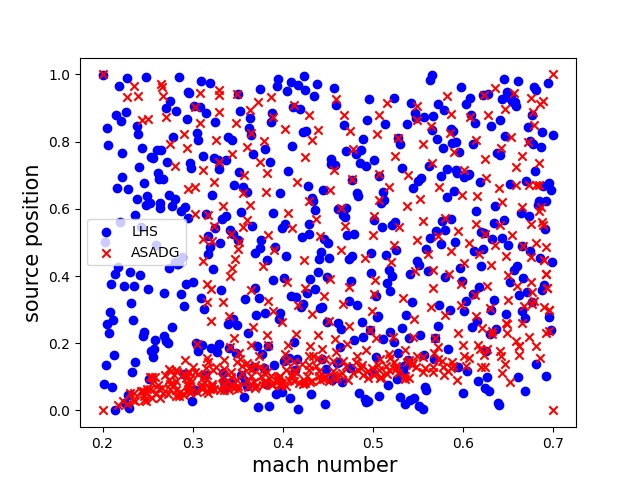}
  \caption{}
  \label{fig:sub1_fig_low_dim}
\end{subfigure}%
\begin{subfigure}{0.5\textwidth}
  \centering
\includegraphics[height=5cm]{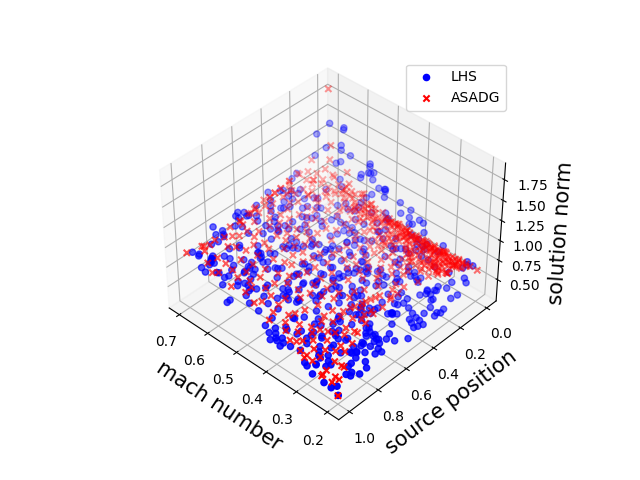}
  \caption{}
  \label{fig:sub2_fig_low_dim}
\end{subfigure}%
\caption{Distribution of sample points by ASADG (red points) and LHS (blue points) of (a) input data and (b) manifold points in the input data domain. } \label{fig_method_dim_low}
\end{figure}

\subsubsection{High dimensional input data}
We need to construct the reduced surface manifold before applying ASADG sampler. LHS, in this case, cannot be applied because of computational cost (3072 data input dimension), instead, we apply uniform sampler to sample the parametric Chebyshev polynomial for input data set. Each sampler generates 500 sample points in the same input data domain.

To visualize the data manifold, we apply the projector $\pi$ on data samples by ASADG and uniform samplers. Fig.\@ \ref{fig_method_dim_high}a illustrates reduced coordinates of sample points in the input data domain by ASADG (blue points) and uniform sampler (red points). With uniform sampler, the distribution of sample reduced coordinates are concentrated, in contrary with ASADG, are largely spatially distributed. 
Because of the chosen projector properties, more sample points are dispersed in parametric space, more different features of data input are presented. Fig.\@ \ref{fig_method_dim_high}b shows reduced surface manifold sample points by ASADG (red points) and uniform (blue points) samplers. The sample points of the obtained via ASADG are extended in a large parametric space, on the other hand, by those of the uniform sampler are concentrated in a small parametric space.

\begin{figure}[ht]
\begin{subfigure}{0.5\textwidth}
  \centering
\includegraphics[height=4cm]{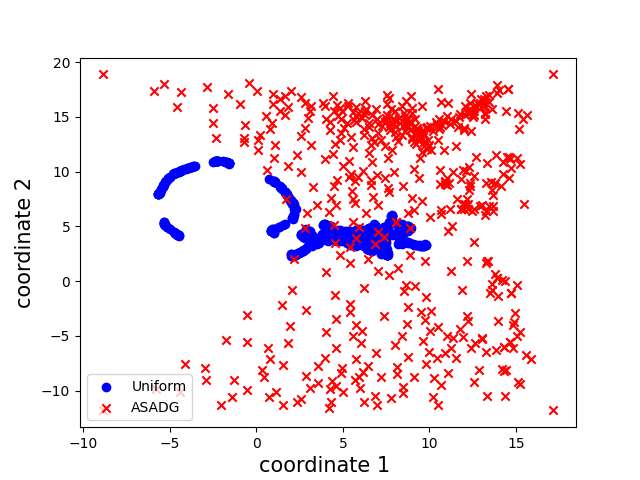}
  \caption{}
  \label{fig:sub1_fig_high_dim}
\end{subfigure}%
\begin{subfigure}{0.5\textwidth}
  \centering
\includegraphics[height=5cm]{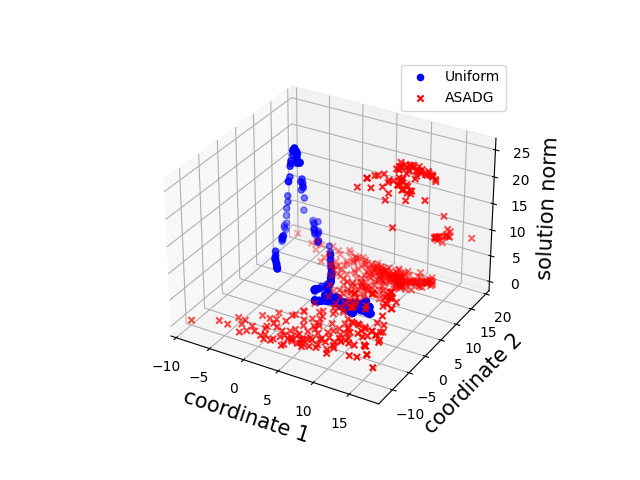}
  \caption{}
  \label{fig:sub2_fig_high_dim}
\end{subfigure}%
\caption{Distribution of sample points by ASADG (red points) and LHS (blue points) of (a) input data and (b) manifold points in the input data domain.} \label{fig_method_dim_high}
\end{figure}

\subsubsection{Data generation time}
To evaluate the generation time, the computational time for solving the equation and for computing the residual are estimated. The average time of a sample for solving and computing the residual are of order $0.00105s\ \textrm{and}\ 0.00036s$ respectively. Table\@ \ref{tab_gen_time} depicts the number of rejected points and of sample points to be generated for the low and high dimension data. The number of rejected points is almost the same as that of the generated points. However, the computational time for the rejected points is almost 2.5 times less than that of the generated points for both cases. If we consider the same amount of time for the LHS or uniform sampler, the number of generated points is of $700$ sample points. In the following study, the number generated training data of $500$ for both cases is considered.

\begin{table}[ht]
\caption{Generation time}\label{tab_gen_time}
\centering
\begin{tabular}{|l|c|c|}
\hline
 &Number of computation&Computational time (s)\\
 \hline
 \hline
Low dimension&&\\
 \hline
Solver& 500 & 0.525\\
\hline
Residual& 563 & 0.202\\
\hline
Total time (s)&& 0.727\\
\hline
 \hline
High dimension&&\\
\hline
Solver& 500 & 0.525\\
\hline
Residual& 611 & 0.220\\
\hline
Total time (s)&& 0.745\\
\hline
\end{tabular}
\end{table}

\subsection{Benchmarks on neural network performance}
This section evaluates the performance a Fully Connected Neural Network (FCNN) using ASADG for low dimensional input data where LHS is used, and high dimensional input data where the uniform sampler is utilized. When ASADG is applied for sampling input data, the number of samples is determined by the iteration number where the metric average value is converged to a certain value. The number of training sampling points of the low and high dimensional input data are both of 500 as described in section \ref{subsec:compsamplers}. Moreover, the test data set is generated by the uniform sampler and its number is of $10\%$ of that of the training data. Table\@ \ref{tab_fcnn} depicts the hyper-parameters of the neural networks employed for low and high dimenstion dataset.

To evaluate the solution, the mean normalized relative error (MNRE) metric is chosen. The metric is defined as follows:
\[
d(\hat y, y) = \frac{1}{M}\sum\limits_{i\in\{1...M\}}\frac{|\hat u_i - u_i|}{|u_i|}
\]

where 
\begin{itemize}
\item $y$ is the targeted solution ;
\item $\hat y$ is the predicted solution ;
\item $\hat u_i, u_i$ is the normalized components of the predicted and targeted solution and are defined as below:
\end{itemize}
\[
u_i = \epsilon + \frac{y_i - y_{i,min}}{y_{i,max}-y_{i,min}}
\]
\begin{itemize}
\item $y_{i,min},y_{i,max}$ are minimum and maximum value of each component of the solutions in the test sample set ;
\item $\epsilon\in]0,1[$ is a constant such that the normalized component $u_i\in[\epsilon,1+\epsilon]$ ;
\item $\epsilon=0.1$ for the following study.
\end{itemize}
Table\@ \ref{tab_fcnn_prec} shows that the MNRE of ASADG sampler improves $33\%$ better in term of neural network precision performance comparing to LHS sampler for the low dimensional input data. As for the high dimensional input data, ASADG sampler improves nearly $16\%$ better for MNRE in term of neural network precision performance comparing to uniform sampler. Moreover, in terms of standard deviation of the errors, for both cases, it is two time less spread than that of either LHS or uniform sampler.

\begin{table}[h]
\caption{Fully connected neural network hyper-parameters}\label{tab_fcnn}
\centering
\begin{tabular}{|l|c|c|}
\hline
&Low dimension&High dimension\\
\hline
\hline
Input dimension & 2 &3072\\
\hline
Output dimension & 2048 &2048\\
\hline
$\#$Layer & 2 &2\\
\hline
$\#$Layer's Node& $\{64,512\}$ &$\{3000,2500\}$\\
\hline
Activation function & Tanh &Tanh\\
\hline
Loss function& MAE &MAE\\
\hline
Optimizer & SGD &SGD\\
\hline
Learning rate & 0.01 &0.01\\
\hline
Epoch & 300  &300\\
\hline
Training samples & 500&500\\
\hline
Batch sizes & 100  &100\\
\hline
\end{tabular}
\end{table}

\begin{table}[ht]
\caption{Fully connected neural network precision performance for second case}\label{tab_fcnn_prec}
\centering
\begin{tabular}{|l|c|c|}
\hline
Low dimension&&\\
\hline
Error type/Sampler &LHS&ASADG\\
\hline
MNRE & 8.098 & 5.970\\
\hline
Standard deviation & 409.007 & 228.511\\
\hline
\hline
High dimension&&\\
\hline
Error type/Sampler &Uniform&ASADG\\
\hline
MNRE & 1.819 & 1.552\\
\hline
Standard deviation&60.367&33.571\\
\hline
\end{tabular}
\end{table}

\section{Summary}

We developed an adaptive sampling algorithm for data generation (ASADG) relying on a physical model. This algorithm allows to build a high dimensional data manifold using triangulation by adding new manifold points step by step with respect to a metric threshold. For high dimensional data, we obtained the reduced surface manifold. For numerical experiments, harmonic transport problem is employed to create two cases for low and high dimensional parametric space. Then, we studied the sensibility of the ASADG's parameters for the low dimension case. After that, the comparison of ASADG with LHS and uniform sampler for low dimension and high dimension respectively is carried out. The results showed a strong interest of using ASADG to capture the data manifold feature; nevertheless, high number of rejected points are generated which implies that additional computational time is needed to compute the equation residual. Finally, benchmarks on the FCNN's precision performance with training sampling created by ASADG for low and high dimensional data comparing to LHS and uniform sampler respectively are carried out. Once again, ASADG sampler has shown a better precision performance than one shot samplers in both cases. However, the comparisons are performed only on one shot samplers for a single experiment on both cases. Further works on benchmarking this method with different sequential samplers need to be investigated. 

\section*{Acknowledgements}
The authors also thank the industrial partners for their supports in the framework of the HSA project \cite{irtsysx}. This project is as well supported by the French government's aid in the framework of PIA (Programme d'Investissement d'Avenir) for Institut de Recherche Technologique SystemX.

\bibliographystyle{unsrt} 
\bibliography{references}

\appendix

\section{Proof of theorems}\label{sec:theorem}
\begin{proof}[Proof of Lemma \ref{lemma:proj}]
We will proove the existence of $\pi = (\mu,\nu)$ by constructing the mappings $\mu$ and $\nu$ corresponding to the input and output data respectively and their pseudo inverse $\mu^{-1}$ and $\nu^{-1}$. 

Assume that $N\geq 2$. Hence, the projector $\mu$ is defined by the use of the UMAP projector such that for $p\in \mathbf{R}^N, \mu(p)=\bar p \in \mathbf R^2$ and its pseudo inverse $\mu^{-1}$ is constructed according to the inverse projector mentioned in the section \ref{sec:reducedmanifold}, hence $||\mu^{-1}(\bar{p}) - p||_{L^2}<\epsilon$. 

Let's define $\nu : \mathbf{R}^M \rightarrow \mathbf{R}$ and $y\rightarrow ||y||_{L_2}=\bar{y}$. Therefore, we can construct a direct mapping $\pi:\mathbf{R}^N\times\mathbf{R}^M\rightarrow \mathbf{R}^2\times\mathbf{R}$ and $\pi(P) = (\mu(p),\nu(y))=\bar{P}$.

Let's construct $\nu^{-1}:\mathbf{R}\rightarrow \mathbf{R}^M$ and $\nu^{-1}(\bar{z}) = \tilde{y}$ such that $\tilde{P}=(\tilde p, \tilde y)\in \mathcal{M}$,  thus by equation (\ref{eq_manifold}) with tolerance constant $\epsilon$ : 
\[
\chi (F(\tilde P) - \epsilon) = 0\ \textrm{or}\ F(\tilde p,\tilde y) \leq \epsilon
\]
Thus for $\mu^{-1}(\bar{p}) = \tilde{p}$ and $||\mu^{-1}(\bar{p}) - p||_{L^2}<\epsilon$ (by the construction of pseudo inverse mapping), then $||\tilde{p} - p||_{L^2}<\epsilon$. 

By continuity of the manifold $\mathcal{M}$ or implicit function $F$, hence $||\tilde y - y||_{L^2}<\epsilon$ . 

Therefore, $||\tilde{P} - P||_{L^2}^2=||\tilde{p} - p||_{L^2}^2+||\tilde{y} - y||_{L^2}^2<2\epsilon^2$, thus $||\tilde{P} - P||_{L^2}<\sqrt{2}\epsilon$. 

We define the projector 
\[\pi^{-1}:\mathbf{R}^2\times\mathbf{R}\rightarrow \mathbf{R}^N\times\mathbf{R}^M\] 
and 
\[\pi^{-1}(\bar{P}) = (\mu^{-1}(\bar{p}),\nu^{-1}(\bar{z}))=\tilde{P}\] then $||\pi^{-1}(\bar{P})-P||_{L_2}<\sqrt{2}\epsilon$.

Therefore, $\pi^{-1}$ is the pseudo inverse projector of $\pi$ 
\end{proof}

\begin{proof}[Proof of Theorem \ref{thm:reducedmanifold}]
From Lemma \ref{lm_pi}:
\[||P - \pi^{-1}(\pi(P))||_{L^2}<\epsilon\]
Thus,
\[||P - \pi^{-1}(\bar{P})||_{L^2}<\epsilon\]

Chosen $\epsilon$ sufficiently small as tolerance constant of $F$ at point $P\in\mathcal P$ and for the continuous $\mathcal{M}$, then:
\[|F(\pi^{-1}(\bar{P}))|< |F(P)| + |F(P) -F(\pi^{-1}(\bar{P}))|  < 2\epsilon \]

Let's define $\bar{F} = F \circ \pi^{-1}$ then for $\bar P\in \bar{\mathcal P}, \bar F(\bar P) \leq 2\epsilon$

$2\epsilon$ can be defined as tolerance constant of $\bar{F}$. Therefore, the reduced order manifold can be constructed as follows:
\[
\bar{\mathcal{M}}=\{\bar{P}\in \bar{\mathbf{P}} | \chi(\bar{F}(\bar{P}) - 2\epsilon)= 0\}
\] 
\end{proof}

\begin{proof}[Proof of corollary \ref{cor:reducedmanifold}]
The same proof as in theorem \ref{th_manifold_reduced}.
\end{proof}

\end{document}